\newcolumntype{+}{!{\vrule width 2pt}}
\newlength\savedwidth
\renewcommand{\@biblabel}[1]{\quad#1.}
\newcommand{\stim}{\mathsf{x}}
\newcommand{\out}{\mathsf{y}}
\newcommand{\conn}{\mathsf{w}}
\newcommand{\pred}{\mathsf{a}}
\newcommand{\ent}{\mathcal{H}}
\newcommand{\ntp}{n_\mathrm{TP}}
\newcommand{\ntn}{n_\mathrm{TN}}
\newcommand{\nfn}{n_\mathrm{FN}}
\newcommand{\nfp}{n_\mathrm{FP}}
\DeclareMathOperator*{\argmax}{arg\,max}
\newtheorem{lemma}{Lemma}
\definecolor{tp_blue}{RGB}{31, 119, 180}
\begin{document}
\vspace*{0.2in}

% Title must be 250 characters or less.
\begin{flushleft}
{\Large
\textbf\newline{Online neural connectivity estimation with ensemble stimulation} % Please use "sentence case" for title and headings (capitalize only the first word in a title (or heading), the first word in a subtitle (or subheading), and any proper nouns).
}
\newline
% Insert author names, affiliations and corresponding author email (do not include titles, positions, or degrees).
\\
Anne Draelos\textsuperscript{1*},
Eva A. Naumann\textsuperscript{2},
John M. Pearson\textsuperscript{1,2,3}
\\
\bigskip
\textbf{1} Department of Biostatistics \& Bioinformatics, Duke University, Durham, NC, USA
\\
\textbf{2} Department of Neurobiology, Duke University, Durham, NC, USA
\\
\textbf{3} Department of Electrical \& Computer Engineering, Duke University, Durham, NC, USA
\\
\bigskip

% Insert additional author notes using the symbols described below. Insert symbol callouts after author names as necessary.
% 
% Remove or comment out the author notes below if they aren't used.
%
% Primary Equal Contribution Note
% \Yinyang These authors contributed equally to this work.

% Additional Equal Contribution Note
% Also use this double-dagger symbol for special authorship notes, such as senior authorship.
%\ddag These authors contributed equally to this work.

% Current address notes
%\textcurrency Current Address: Dept/Program/Center, Institution Name, City, State, Country % change symbol to "\textcurrency a" if more than one current address note
% \textcurrency b Insert second current address 
% \textcurrency c Insert third current address

% Deceased author note
%\dag Deceased

% Group/Consortium Author Note
%\textpilcrow Membership list can be found in the Acknowledgments section.

% Use the asterisk to denote corresponding authorship and provide email address in note below.
* anne.draelos@duke.edu

\end{flushleft}

% Please keep the abstract below 300 words
\section*{Abstract}
One of the primary goals of systems neuroscience is to relate the structure of neural circuits to their function, yet patterns of connectivity are difficult to establish when recording from large populations in behaving organisms. Many previous approaches have attempted to estimate functional connectivity between neurons using statistical modeling of observational data, but these approaches rely heavily on parametric assumptions and are purely correlational. Recently, however, holographic photostimulation techniques have made it possible to precisely target selected ensembles of neurons, offering the possibility of establishing direct causal links. Here, we propose a method based on noisy group testing that drastically increases the efficiency of this process in sparse networks. By stimulating small ensembles of neurons, we show that it is possible to recover binarized network connectivity with a number of tests that grows only logarithmically with population size under minimal statistical assumptions. Moreover, we prove that our approach, which reduces to an efficiently solvable convex optimization problem, can be related to Variational Bayesian inference on the binary connection weights, and we derive rigorous bounds on the posterior marginals. This allows us to extend our method to the streaming setting, where continuously updated posteriors allow for optional stopping, and we demonstrate the feasibility of inferring connectivity for networks of up to tens of thousands of neurons online. Finally, we show how our work can be theoretically linked to compressed sensing approaches, and compare results for connectivity inference in different settings.

% Please keep the Author Summary between 150 and 200 words
% Use first person. PLOS ONE authors please skip this step. 
% Author Summary not valid for PLOS ONE submissions.   
% \section*{Author summary}
% TBD

%\linenumbers

% Use "Eq" instead of "Equation" for equation citations. E.g. Eq~(\ref{eq:schemeP})
\section*{Introduction}
A long-standing problem in systems neuroscience is that of inferring the functional network structure of a population of neurons from its neural activity. That is, given a set of neural recordings, we would like to know which neurons influence which others in the system without \emph{a priori} knowledge of their anatomical connectivity. This problem is made difficult in two ways: First, new techniques in microscopy and neural probe technology have dramatically increased the size of recorded neural populations \cite{naumann2016whole, lu2017video,Stringer2019Spont, steinmetz2020neuropixels}, posing a computational challenge. Second, the fact that typical interventions in these systems remain broad and non-specific poses problems for causal inference \cite{mehler2018lure,sadeh2020patterned, sadeh2020theory}.

However, recent advances in precision optics and opsin engineering have resulted in photostimulation tools capable of precisely targeting individual neurons and neuronal ensembles \cite{emiliani2015all,packer2015simultaneous,pegard2017three,yang2018holographic,marshel2019cortical}. This suggests that a combination of simultaneous recording and  \emph{selective} stimulation could potentially allow for functional dissection of large-scale neural circuits. Yet the most common methods for inferring functional connectivity are purely statistical models, applied to observational data \cite{paninski2004maximum, pillow2008spatio}. They do not consider causal inferences based on interventions (though cf. \cite{hu2009reconstruction, shababo2013bayesian, aitchison2017model}), and often make stringent parametric assumptions, which can limit their ability to recover connectivity even in simulations \cite{lutcke2013inference, das2020systematic}. 

Here, we take a different approach to inferring functional connectivity based on targeted stimulation of small, randomly-chosen neural ensembles. We adopt the framework of group testing \cite{dorfman, du2000combinatorial, aldridge2019group}, an experimental design strategy that relies on simultaneous tests of multiple items. Group testing reduces the complexity of detecting rare defects (here, true connections) from linear to logarithmic in the number of units, allowing it to scale to large neural populations. We show that this approach, which makes only mild statistical assumptions, can be significantly more efficient than testing single neurons in isolation. Furthermore, we propose an efficient convex relaxation of the inference problem that is related to marginal Bayesian posteriors for the existence of individual connections. Finally, we show that an optimization scheme based on dual decomposition offers a highly parallelizable, GPU-friendly problem formulation that allows us to perform inference on a population of $10^4$ neurons in the online setting. Taken together, these ideas suggest new algorithmic possibilities for the adaptive, online dissection of large-scale neural circuits.

\section*{Methods}
\subsection*{Network inference as group testing}
Our goal is to recast the problem of inferring \emph{functional} connectivity between neurons as a group testing problem. This functional connectivity has only to do with the ability of one neuron to cause changes  in the activity of another and does not imply a direct synaptic connection. Thus, two neurons may be functionally connected when no direct synaptic connection exists. In particular, we are not addressing the problem of unobserved confounders---unrecorded neurons that mediate observed interactions. Nonetheless, functional connectivity remains a quantity of intense interest, since it is likely to reflect patterns of influence and information flow in neural circuits \cite{feldt2011dissecting, Grosenick2015closed}. 

To establish conventions, it will help to consider a simple baseline protocol for establishing functional connectivity: let each test consist of stimulating a single neuron, with the test possibly repeated several times. In this setup, a stimulated neuron $i$ can be considered functionally upstream of a second neuron $j$ if $j$ typically alters its activity in response to stimulation of $i$. More precisely, we assume that there exists a test $h: \mathcal{D} \rightarrow \lbrace 0, 1 \rbrace$ that concludes from data whether stimulation of $i$ altered activity in $j$. This approach has two important advantages: First, we do not need to assume that excitation of $i$ results in excitation of $j$, only that the test detects a difference. In other words, we are not limited to excitatory connections. Second, while a given test might make parametric assumptions about the data, our subsequent analysis will be agnostic to these assumptions. Thus the ability to consider a multiplicity of tests offers us a degree of statistical flexibility not present in approaches that must rely on, e.g., linearity of synaptic contributions from different neurons. But these benefits imply a tradeoff: we will only be able to amass statistical evidence for the \emph{existence} of such connections, and possibly their signs, but not their relative strength. We view this as a reasonable tradeoff in cases where the structure of connections is of primary concern, with the added observation that, once connections are identified, a second round of more focused testing or post-hoc methods can serve to establish strengths.

To model the effects of ensemble photostimulation, we assume that all neurons in the target set receive roughly the same light intensity, and that this intensity is sufficient to evoke a detectable response if any one of the neurons is connected to some other. Moreover, we assume that stimulation is strong enough that even, in cells receiving mixed excitatory and inhibitory connections, one will predominate. That is, given $N$ observed neurons subjected to stimulations indexed by $t$, let $\stim_{tj} = 1$ if neuron $j$ is stimulated on round $t$, and $\conn_{i\rightarrow j} = 1$ if neuron $i$ functionally influences neuron $j$. With these conventions, we define the predicted activation of unit $i$ as the logical OR of all the connections 

\begin{equation}
    \label{defn:pred}
    \pred_{ti}(\conn) = \bigvee_{j=1}^N \conn_{ij} \stim_{tj} = \max(\mathbf{\conn}_{i\cdot} \odot \mathbf{\stim}_{t\cdot})
\end{equation}
and the outcome of the hypothesis test $h$ with false positive rate $\alpha$ and false negative rate $\beta$ as 
\begin{align}
    \label{defn:stat_model}
    \out_{ti}|(\pred_{ti}=1) \sim \mathrm{Bern}(1-\beta) && \out_{ti}|(\pred_{ti}=0) \sim \mathrm{Bern}(\alpha) \, .
\end{align}
Note that this assumes $\pred$ is a sufficient statistic for the outcome $\out$, which may not hold if, e.g., false positive rates increase with the number of stimulated neurons \cite{aldridge2019group}.

This formulation, in which multiple units are combined into a single test that returns a positive result if any of the individual units would alone, is known as the group testing problem. Originally devised by Dorfman \cite{dorfman} as an efficient means of testing for syphilis in soldiers, group testing has spawned an enormous literature, with applications in medicine, communications, and manufacturing (recently reviewed in \cite{aldridge2019group}). As shown by Atia and Saligrama \cite{atia2012boolean}, this can be cast in the language of information theory as a channel coding problem with $\stim$ the codebook and $\out$ the channel output. Moreover, \cite{atia2012boolean} demonstrated that when $\stim$ is a randomized testing strategy to find $K$ true positives, the number of tests required to solve the problem with exponentially small average-case error is both upper and lower bounded asymptotically by $K \log N$, even when tests are noisy and $K \sim {o}(N)$. 

The problem we consider here is more specifically one of noisy group testing in the sparse regime. That is, we allow the test to be corrupted as specified in (\ref{defn:stat_model}) and assume $K \sim \mathcal{O}(N^\theta)$ with $\theta \in (0, 1)$. Within this regime, approaches principally differ along two axes: adaptive versus non-adaptive test designs and the method used to infer $\conn$. In non-adaptive designs, the tests are fixed in advance, allowing them to be run in parallel at the cost of some statistical efficiency (though not necessarily asymptotically \cite{aldridge2012adaptive, chan2014non}). Adaptive designs, by contrast, are chosen sequentially, often to optimize the information gained with each test. Below, we consider both methods, but for the remainder of this section and the next, we focus on the second axis: the method of inferring $\conn$. 

For simplicity we focus on a single output neuron $j$ and its potential incoming connections $\conn_{ij}$ (see Fig.~\ref{fig:fig1}). Since the inference problems for $\conn_{ij}$ and $\conn_{ij'}$ are completely independent for $j'\neq j$, these problems can be trivially parallelized, and we drop the index $j$ in what follows. Given (\ref{defn:pred}) and (\ref{defn:stat_model}), we can infer the true connections by maximizing the total log likelihood over all $T$ tests:
\begin{align}
    \label{bin_LL}
    \log p(\lbrace \out_t \rbrace | \lbrace \conn_i, \stim_t\rbrace)
    &= T \log(1-\alpha) - \log \frac{1-\alpha}{\alpha} \sum_t \out_t  \\
    &\phantom{=}
    - \log \frac{1-\alpha}{\beta}\sum_t \pred_t(\conn)  + \log \frac{(1-\alpha)(1-\beta)}{\alpha\beta}\sum_t \out_t \pred_t(\conn)  \nonumber \\
    &= \sum_t \left[ \log \frac{(1-\alpha)(1-\beta)}{\alpha\beta} \out_t - \log \frac{1-\alpha}{\beta} \right] \pred_t(\conn) + \text{const} \nonumber\;, 
\end{align}
where the constant does not depend on $\conn$. For any reasonable test, we expect ${1-\beta} > {\alpha}$ (i.e., the true positive rate exceeds the false positive rate) and $1-\alpha > \beta$ (true negative rate exceeds false negative rate), so that the term in brackets is positive when $\out_t = 1$ and negative when $\out_t = 0$. Thus the maximum likelihood solution is one in which the bits $\pred_t(\conn)$ and $\out_t$ most often match, similar to one-bit compressed sensing \cite{boufounos20081, malioutov2012boolean}. 

\begin{figure}[!h]
    \captionsetup[subfigure]{position=top, labelfont=bf}
    \centering
    \begin{subfigure}[c]{0.4\textwidth}
        \subcaption{}
        \centering
        \includegraphics[width=\textwidth]{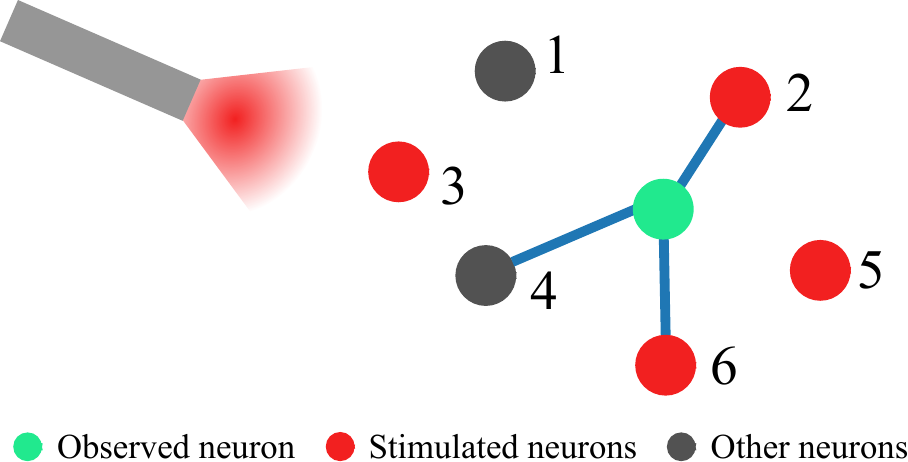}
    \end{subfigure}
    \begin{subfigure}[c]{0.4\textwidth}
        \centering
        %\subfloat[]{
        \begin{tabular}{cccccccc}  
            & \multicolumn{6}{c}{Neuron} \\
            %\cmidrule(r){2-7}
            Test  & 1 & {\bf \color{tp_blue} 2} & 3 & {\bf \color{tp_blue} 4} & 5 & {\bf \color{tp_blue} 6} & Result\\
            \midrule
            1 & 0 & 0 & 1 & 1 & 0 & 1 & 1      \\
            \rowcolor{gray!20} 2 & 0 & 1 & 1 & 0 & 1 & 1 & 1      \\
            3 & 1 & 0 & 1 & 0 & 1 & 0 & 0      \\
            4 & 1 & 1 & 0 & 0 & 0 & 0 & 1      \\
            5 & 0 & 0 & 1 & 0 & 1 & 0 & 0      \\
            \bottomrule
        \end{tabular}
        %}
    \end{subfigure}
    \linespread{1.0}\selectfont{\caption{\textbf{Neural stimulation as group testing. (a)} Holographic photostimulation targets specific subsets of neurons (red), which result in activity in a target neuron (green). Neurons 2, 4, and 6 are functionally connected to the target neuron (blue lines), and stimulation of any one of them is sufficient to evoke activity. {\bf (b)} Equivalent group testing matrix ($\stim_{ti}$), with each row a test and each column a neuron. The result of each test ($\out_t$) is a logical OR of the stimulation variables for the true connections (blue). Test 2 (gray) corresponds to the stimulation in (a).}
    \label{fig:fig1}}
\end{figure}

% For figure citations, please use "Fig" instead of "Figure". Fig~\ref{fig1}
% Place figure captions after the first paragraph in which they are cited.
% \begin{figure}[!h]
% \caption{{\bf Bold the figure title.}
% Figure caption text here, please use this space for the figure panel descriptions instead of using subfigure commands. A: Lorem ipsum dolor sit amet. B: Consectetur adipiscing elit.}
% \label{fig1}
% \end{figure}

Unfortunately, this integer programming problem is NP-hard in general \cite{aldridge2019group}, so  approximate solution methods must be used. Previous approaches have used Monte Carlo methods like Gibbs Sampling \cite{knill1996interpretation} and message-passing approaches like Belief Propagation \cite{sejdinovic2010note, aldridge2019group}. A third class of approaches \cite{malioutov2012boolean} proposes to relax the binary variables $\conn_i \in \lbrace 0, 1\rbrace \rightarrow w_i \in [0, 1]$ and solve a linear program to minimize $\sum_i w_i + \gamma \sum_t \xi_t$, with the $\xi_t$ slack variables representing noise (bit flips between $\out_t$ and $\pred_t$) and $\gamma$ parameterizing the sparsity of the solution. This method indeed performs well in practice \cite{malioutov2012boolean, aldridge2019group} and makes no assumptions about the form of the noise, though it does require tuning $\gamma$, which may be difficult when the true number of defects $K$ is unknown.

Here, we propose an alternate relaxation based on independently relaxing the variables $\pred_t \rightarrow a_t$ and relating these to the $w_i$ via constraints. That is, instead of the $N + T$ variables $\lbrace w_i, \xi_t\rbrace$, we will choose to optimize over $\lbrace w_i, a_t\rbrace$, solving 
\begin{equation}
    \label{relaxed_LP}
    \max_{\lbrace w_i, a_t \rbrace} \; \sum_t c_t a_t \quad
    \text{ subject to} \quad \stim_{ti} w_i \le a_t \le \sum_i \stim_{ti} w_i, \quad w_i, a_t \in [0, 1] \; 
\end{equation}
with $c_t = \log \frac{(1-\alpha)(1-\beta)}{\alpha\beta} \out_t - \log \frac{1-\alpha}{\beta}$. The constraints we impose on the new variables $a_t$ can be understood from (\ref{defn:pred}) by noting that the maximum of a set of positive variables must be greater than or equal to each of them individually and at most equal to their sum. As we will show, this linear program in $N + T$ variables with $\sum_{ti} \stim_{ti} + T$ constraints may be large (and grows with the number of tests) but can nonetheless be solved efficiently even for sizable neural populations. Unfortunately, there is no guarantee that the solution to (\ref{relaxed_LP}) produces a solution to the original integer optimization problem, and one is left with the problem of finding some method of rounding $w_i$ to produce a binary solution \cite{schrijver1998theory}. Fortunately, as we will argue below, this is unnecessary, and a slight alteration to (\ref{relaxed_LP}) gives the $w_i$ an attractive interpretation.

\subsection*{Relaxed group testing as Bayesian inference}
In the discussion above we focused on maximum likelihood decoding, since this procedure has exponentially small error in the large $T$ limit \cite{atia2012boolean, aldridge2012adaptive,baldassini2013capacity, chan2014non}. However much of this work also assumes that the number of true positives $K$ is known. In our case, by contrast, we might only have weak beliefs about the distribution of connections across neurons. Moreover, with a fixed time budget for data collection, we would benefit from the option to either stop the experiment early (if all connections have been found) or produce an estimate of uncertainty for the $\conn_{i}$ at the end of the experiment.

Thus we consider the problem of Bayesian inference for the likelihood given in (\ref{bin_LL}) with Bernoulli priors $\conn_i \sim \mathrm{Bern}(\pi_i)$. In this case, the log posterior takes the form 
\begin{equation}
    \label{w_posterior}
    \log p(\conn \vert \stim, \out ) = \sum_t c_t \pred_t(\conn) + \sum_i \mu_i \conn_i - \log \mathcal{Z}\; ,
\end{equation}
with $\mu_i = \log \frac{\pi_i}{1-\pi_i}$ and $\mathcal{Z}$ a normalizing constant. Clearly, the posterior is in exponential family form, with sufficient statistics $\conn_i$ and $\pred_t(\conn)$. Full inference requires computation of $\mathcal{Z}$, which is practically infeasible for $N$ or $T$ large. However, we are primarily concerned with posterior (marginal) beliefs about individual connections, so we might settle for only knowing $p(\conn_i\vert \stim, \out )$.

Luckily, two facts already mentioned allow us to compute these marginals efficiently: First, (\ref{w_posterior}) is in exponential family form, and second, the $\conn_i$ are sufficient statistics for the posterior. Taking a Variational Bayes approach \cite{blei2017variational}, we rewrite inference as an optimization problem. Let
\begin{equation}
    \label{generic_elbo}
    q_*(\conn ) \equiv \argmax_{q(\conn) \in \mathcal{Q}}\; \mathbb{E}_q[\log p(\out \vert \conn, \stim) + \log p(\conn)] + \ent[q(\conn)] \;,
\end{equation}
where $\mathcal{Q}$ is some class of distributions over which we optimize and $\ent = \mathbb{E}_q[-\log q(\conn)]$ is the entropy. This is equivalent \cite{blei2017variational} to minimizing the KL divergence between $q_*(\conn)$ and $p(\conn \vert \stim, \out)$, with $D_{KL}(q_*\Vert p) = 0$ if and only if $q_* = p$ almost everywhere.

We exploit the fact that we know the form of the posterior to choose a class $\mathcal{Q}$ that contains $p(\conn|\stim, \out)$, since this will imply that (\ref{generic_elbo}) yields the true posterior. The obvious choice is to take $\mathcal{Q}$ to be the exponential family defined by the sufficient statistics $\conn_i$ and $\pred_t$. However, instead of the natural parameters corresponding to these sufficient statistics, we will define them in terms of the expectations $w_i \equiv \mathbb{E}_q[\conn_i]$ and $a_t \equiv \mathbb{E}_q[\pred_t]$. In optimization language, the latter are the primal variables and the former the duals, which are related to one another through derivatives of the free energy $\log \mathcal{Z}$ \cite{wainwright2008graphical, blei2017variational}. With this choice, we can write
\begin{align}
    \label{polytope_opt}
    (\mathbb{E}[\conn_i], \mathbb{E}[\pred_t]) \equiv \argmax_{(w, a) \in \mathcal{M}}\; &\sum_t c_t a_t + \sum_i \mu_i w_i + \ent(w, a)  
\end{align}
where $\mathcal{M}$ is the marginal polytope, the set of marginals feasible under all possible distributions \cite{wainwright2008graphical}. Clearly, since $\conn_i$ and $\pred_t$ are binary, we have $\mathbb{E}[\conn_i] = P(\lbrace \conn_i = 1 \rbrace)$, $\mathbb{E}[\pred_t] = P(\cup_{j, \stim_{tj}=1} \lbrace \conn_j = 1 \rbrace)$, and the constraints in (\ref{relaxed_LP}) follow from simple containment and union bounds for any $P$. More generally, letting $\mathcal{S}_t = \lbrace j \vert \stim_{tj} = 1 \rbrace$, there are additional consistency conditions on the $a_t$:
\begin{equation}
    \label{extra_polytope}
    a_t \le P(\cup_{t' \in \mathcal{T}} \lbrace \pred_{t'} = 1\rbrace) \le \sum_{t' \in \mathcal{T}} a_{t'} \quad 
    \text{ whenever}\quad \mathcal{S}_t \subset \bigcup_{t' \in \mathcal{T}} \mathcal{S}_{t'}.
\end{equation}
That is, whenever any subset of trials $\mathcal{T}$ includes all neurons stimulated on trial $t$, $a_t$ is bounded above by the sum of the $a_{t'}$ from these other trials. 

However, if we allow $\ent$ to take values in $\mathbb{R} \cup \lbrace \infty \rbrace$, defining $\overline{\ent}(w, a) = \infty$ for $(w, a) \notin \mathcal{M}$, then we can write
\begin{align}
    \label{elbo}
    (\mathbb{E}[\conn_i], \mathbb{E}[\pred_t]) \equiv \argmax_{\lbrace w_i, a_t\rbrace}\; &\sum_t c_t a_t + \sum_i \mu_i w_i + \overline{\ent}(w, a) \\ 
    \text{s.t. }\; &\stim_{ti} w_i \le a_t \le \sum_i \stim_{ti} w_i, \quad w_i, a_t \in [0, 1] \nonumber,
\end{align}
where again, $\overline{\ent}$ incorporates the constraints in (\ref{extra_polytope}). This is equivalent to (\ref{relaxed_LP}) when we assume flat priors on $\conn_i$ ($\mu_i = 0$) and no entropy term. In other words, the relaxed $a_t$ and $w_i$ appearing in (\ref{relaxed_LP}) are approximate \emph{posterior probabilities} for the binary variables $\pred_t$ and $\conn_i$, and this relation is exact when the entropy term $\overline{\ent}$ is included as a regularizer. Thus, solving the optimization (\ref{elbo}) allows us to compute posterior marginals for the connections, even though we cannot write down $p(\conn \vert \stim, \out)$.

\subsection*{Optimization and online inference}
The above arguments show that posterior inference for group testing can be reduced to the variational problem (\ref{elbo}). However, two difficulties remain: First, calculating $\overline{\ent}(w, a)$, requires knowing the exponential family normalizing factor $\mathcal{Z}$, which is intractable in general. Second, we need an efficient method for solving (\ref{elbo}) for very large problems. Note again that we have only been considering the case of a single output neuron, which results in a convex program with $N + T$ variables and $2N + 3T + NT$ nominal constraints (\ref{relaxed_LP}). When generalized to the full network, we will have $N$ independent (and thus parallelizable) programs of this size, indicating both high memory and computational requirements. Yet, as we will show, further simplifications are possible that allow solutions to (\ref{elbo}) to be implemented even for $N > 10^4$ in the online setting.

We begin by considering a slightly more general exponential family $\widetilde{\mathcal{Q}}$ in which the $\pred_t$ as well as the $\conn_i$ are fundamental variables, with (\ref{defn:pred}) enforced by constraint:
\begin{equation}
    \label{dual_decomp}
    \log \tilde{q}_{\eta, \nu}(\conn, \pred) = \sum_t \gamma_t \pred_t + \sum_i \delta_i \conn_i - \sum_t \eta_t (\pred_t - \sum_i \stim_{ti} \conn_i) - \sum_{ti} \stim_{ti}\nu_{ti}  (\conn_i - \pred_t) - \log \mathcal{Z}(\eta, \nu)\, ,
\end{equation}
with $\nu, \eta \ge 0$. Note that this will be related to forming the Lagrangian of the problem (\ref{elbo}), but here, we are instead defining a set of probability distributions with $\sup_{\eta, \nu \ge 0}  \tilde{q} \in \mathcal{Q}' \supset \mathcal{Q}$. That is, as the constraint forces are maximized, all distributions satisfy the explicit constraints in (\ref{elbo}), though they are not guaranteed to satisfy those in (\ref{extra_polytope}). We find that, in practice, this does not affect the accuracy of recovery. 

What is important to note here is that the introduction of dual variables has effectively decoupled $\conn_i$ from $\pred_t$, since their dependency structure is a bipartite graph. Moreover, conditioned on the dual variables, the primal variables are all \emph{independent}. Following the derivation leading to (\ref{elbo}) we can now pose an equivalent optimization problem: 
\begin{align}
    \label{univariate_decomp}
    &\sup_{\substack{\eta, \nu \ge 0 \\ w_i, a_t \in [0, 1]}}\; \sum_t \mathcal{L}_t(a_t, \eta, \nu) + \sum_i \mathcal{L}_i(w_i, \eta, \nu) \, ,\\ 
    \mathcal{L}_t &= \left(c_t - \eta_t + \sum_i \stim_{ti}\nu_{ti} \right)a_t + \ent_2(a_t) \label{a_lagrangian}\\ 
    \mathcal{L}_i &= \left(\mu_i + \sum_t \stim_{ti} \eta_t - \sum_t \stim_{ti}\nu_{ti} \right)w_i+ \ent_2(w_i) \label{w_lagrangian}\, , 
\end{align}
with $\ent_2(x) = -x \log x - (1-x) \log(1-x)$ the entropy of a binary variable with mean $x$ (measured in nats). 
The univariate maximizations over $a_t$ and $w_i$ can easily be solved numerically: 
\begin{align}
    a^*_t = f\left(c_t - \eta_t + \sum_i \stim_{ti}\nu_{ti}  \right) &&
    w^*_i = f \left(\mu_i + \sum_t \stim_{ti}\eta_{t}  - \sum_t \stim_{ti} \nu_{ti} \right) \label{true_soln} \, ,
\end{align}
where $f(x) = e^x/(1+e^x)$ is the logistic function. This formulation naturally leads to a dual decomposition approach \cite{bertsekas2016} in which we first maximize exactly over $w$ and $a$ then maximize (\ref{univariate_decomp}) at the resulting optimum with respect to $\eta$ and $\nu$. Alternately, we can bound the entropy $\ent_2$ by a quadratic (Supplementary section \ref{app_H_bounds}), for which we have the solution:
\begin{align}
    a^*_t &= \left[1 - \left(\frac{1}{2}\right)^{\sum_i \stim_{ti}} + \frac{1}{\sigma} \left(c_t - \eta_t + \sum_i \stim_{ti}\nu_{ti}  \right)\right]_{[0, 1]} \label{a_soln}\\
    w^*_i &= \left[\frac{1}{2} + \frac{1}{\sigma} \left(\mu_i + \sum_t \stim_{ti}\eta_{t}  - \sum_t \stim_{ti} \nu_{ti} \right)\right]_{[0, 1]} \label{w_soln} \, ,
\end{align}
where $[\cdot]_{[0, 1]}$ indicates truncation to the unit interval and $\sigma \in (0, 4]$ is a regularization parameter. In practice, this more weakly regularized approach, which results in overconfident posteriors, performs better when binarizing $w$ to reconstruct the underlying network.

This approach is summarized in Algorithm \ref{dual_decomp_alg}. Thanks to the decoupled nature of (\ref{dual_decomp}), gradient updates for $\eta$ and $\nu$ can be performed in parallel, so efficient GPU implementations are possible. The key limitation for this approach is memory: while the $\nu$ matrix is sparse (effectively masked by $\stim$), one must still maintain space for $a$, $w$, $c$, $\mu$, $\eta$, and $\nu$ for $\mathcal{O}(NST)$ parameters, with $S$ the average number of neurons stimulated per trial. Thus, while we do benefit from using first-order methods with momentum like Adam \cite{kingma2014adam}, these also come at the additional memory cost of $\mathcal{O}(2NST)$ running mean and variance estimates, making it impractical for systems larger than $\sim10^3$ neurons.
\begin{algorithm}
\caption{Dual decomposition inference}
\label{dual_decomp_alg}
\begin{algorithmic}[1]
\State {\bf Initialize:} $\eta_t, \nu_{ti} \gets 0$ 
\State
\While{not converged}
\State Solve for $a^*_t$, $w^*_i$ via (\ref{true_soln}) or (\ref{a_soln}), (\ref{w_soln})
\State $\eta_{t} \gets \eta_t - \alpha (\sum_i \stim_{ti} w^*_i - a^*_t)$
\State $\nu_{ti} \gets \nu_{ti} + \alpha \stim_{ti} (w^*_i - a^*_t)$
\EndWhile
\end{algorithmic}
\end{algorithm}

Along different lines, we can further reduce memory requirements for very large systems by simply limiting the gradient updates in Algorithm \ref{dual_decomp_alg} to the $\eta_t$ and $\nu_{ti}$ for the most recent $\tau$ time steps. That is, for $\tau = 50$, we stop updating $\eta_2$ for $t > 52$. This halts the memory growth of the algorithm with number of tests performed, for a space complexity of $\mathcal{O}(NS\tau + 2N^2)$. As we will demonstrate in the next section, this allows us to perform inference on a network of $10^4$ neurons (one hundred million potential connections) using gradient descent with negligible loss of accuracy. In fact, our GPU implementation using CuPy \cite{cupy_learningsys2017} performed each gradient descent iteration in under 2 seconds, fast enough to perform online inference during experiments.

Finally, we note that our identification of the $w_i$ with the posterior $p(\conn_i|\stim, \out)$ naturally lends itself to adaptive testing. In typical adaptive algorithms, one is interested in maximizing some expected information gain or minimizing uncertainty, which can pose difficult computational problems when only point estimates are available \cite{du2000combinatorial, aldridge2019group}. Here, however, we can trivially select those units with greatest posterior uncertainty for priority testing. In a different vein, access to calibrated uncertainties also facilitates either early stopping (when a minimum certainty is required) or optimal test allocation (when the number of tests is limited).

\subsection*{Interpolation between group testing and compressed sensing}
As mentioned above, our approach possesses some similarities with 1-bit compressed sensing \cite{boufounos20081, malioutov2012boolean}, which has also been studied as a potential method for inferring functional connectivity in neural networks \cite{hu2009reconstruction, fletcher2011neural}. Here, we show that our relaxation (\ref{relaxed_LP}) can make the link between the two precise.

Consider a version of the one-bit compressed sensing problem in which we assume continuous weights $w_i$ while retaining binary stimulations $\stim_t$ and outcomes $\out_t$. Then (\ref{defn:stat_model}) is replaced by 
\begin{equation}
    \label{1bCS}
    \log p(\lbrace \out_t \rbrace | \lbrace w_i, \stim_t\rbrace) = \sum_t c_t a_t + \text{const} \; ,
\end{equation}
where we recall that $c_t$ depends on $\out_t$ and now the predictor $a_t \equiv \sum_i w_i \stim_{ti}$. Of course, maximizing this likelihood is equivalent to performing logistic regression, and if we employ a $\mu_i$ regularization as in (\ref{w_posterior}), this is equivalent to a LASSO problem. Here, we will show that (\ref{1bCS}) and (\ref{defn:stat_model}) can be encompassed in a single formulation that naturally interpolates between the two.

The key to this is to note that in our original formulation, when we relax $\pred_t$ and $\conn_i$ to $a_t$ and $w_i$, $a_t$ becomes an independent variable constrained by the values of $w_i$:
\begin{equation}
    w_i \stim_{ti} \le a_t \le \sum_i w_i \stim_{ti}
\end{equation}
which we can rewrite in the suggestive form
\begin{equation}
    \lVert \mathbf{w \odot \stim}_t \rVert_\infty \le a_t \le \lVert \mathbf{w \odot \stim}_t \rVert_1 
\end{equation}
since all our $w_i \ge 0$.
That is, $a_t$ becomes equal to $\lVert \mathbf{w \odot \stim}_t \rVert_p$ for some $p \ge 1$. When this reaches the lower bound, we have group testing, while the upper bound represents one-bit compressed sensing as in (\ref{1bCS}). 

% Results and Discussion can be combined.
\section*{Results}
We tested the performance of Algorithm \ref{dual_decomp_alg} in both the offline (all data) and online (one test at a time) settings. In the offline setting, we considered Bernoulli designs in which each neuron was stimulated independently on each trial with probability $p_\mathrm{stim} = S/N$. In the online setting, we considered both Bernoulli designs and adaptive designs, in which the top $S$ most uncertain neurons (those with $w_i$ closest to $\frac{1}{2}$) were selected for the next test. We used randomly generated binary graphs $\conn_{ij}$ in which each link appeared independently with probability $K/N$. 

We also distinguish two separate problems: uncertainty quantification and recovery. The former focuses on efficient calculation of accurate Bayesian posteriors using the formulation (\ref{elbo}), while the latter focuses on binarizing $w$ to produce the most likely underlying $\conn$. Thus, for uncertainty we use the tigher entropy bound $\ent_2$ and priors defined by $\mu$, while for recovery we use the computationally cheaper quadratic approximation to $\ent$ with $\sigma \ll 1$, $\mu = 0$ and a classification threshold at $w = \frac{1}{2}$. In our experiments, this weak regularization, which resulted in overconfident posteriors, consistently produced better recovery. The experiments presented here focus on the recovery problem. 

%Results for uncertainty are presented in Appendix \ref{app:additional_expts}. 
Unless otherwise stated, we use a {\bf base case} of $N=1000$, $K = N^{0.3} \approx 8$ incoming connections per neuron, $S = 10$ stimulated neurons per test, $\alpha=\beta=0.05$, $\mu=0$, $\sigma=0.1$, and Adam \cite{kingma2014adam} with step size $0.01$, $\beta_1=0.9$, and $\beta_2=0.999$ for optimization in the offline setting, with convergence typically achieved within 50 steps.
All experimental simulations were run on a 2018 custom-built desktop machine with 128 GB of system memory, a 14 core 3.1 GHz Intel i9-7940X processor, an NVIDIA Titan Xp GPU with 12 GB of memory, and running Ubuntu 18.04.4 LTS.

\subsection*{Network recovery in the offline setting}
Fig.~\ref{fig:fig2} demonstrates the effectiveness of our algorithm in correctly recovering a binary network. The inferred system is initially regularized toward the maximum entropy solution at ($w=\frac{1}{2}$), but as the number of tests increases, connections are rapidly segregated toward 0 and 1, with classification based on a threshold at 0.5. True negatives are learned quickly at the expense of incorrectly classifying some true positives (drop in sensitivity as specificity rises), but the algorithm eventually corrects for this behavior (Fig.~\ref{fig:fig2}a). Tests with higher error rates show decreased performance (Fig.~\ref{fig:fig2}b), but this is mitigated at larger numbers of tests. Finally, in comparison with a naive model that stimulates single neurons ($S=1$, Supplemental section 2) group testing dominates on both measures after about 500 trials (Fig.~\ref{fig:fig2}c,d).

\begin{figure}[!h]
    \includegraphics[width=\textwidth]{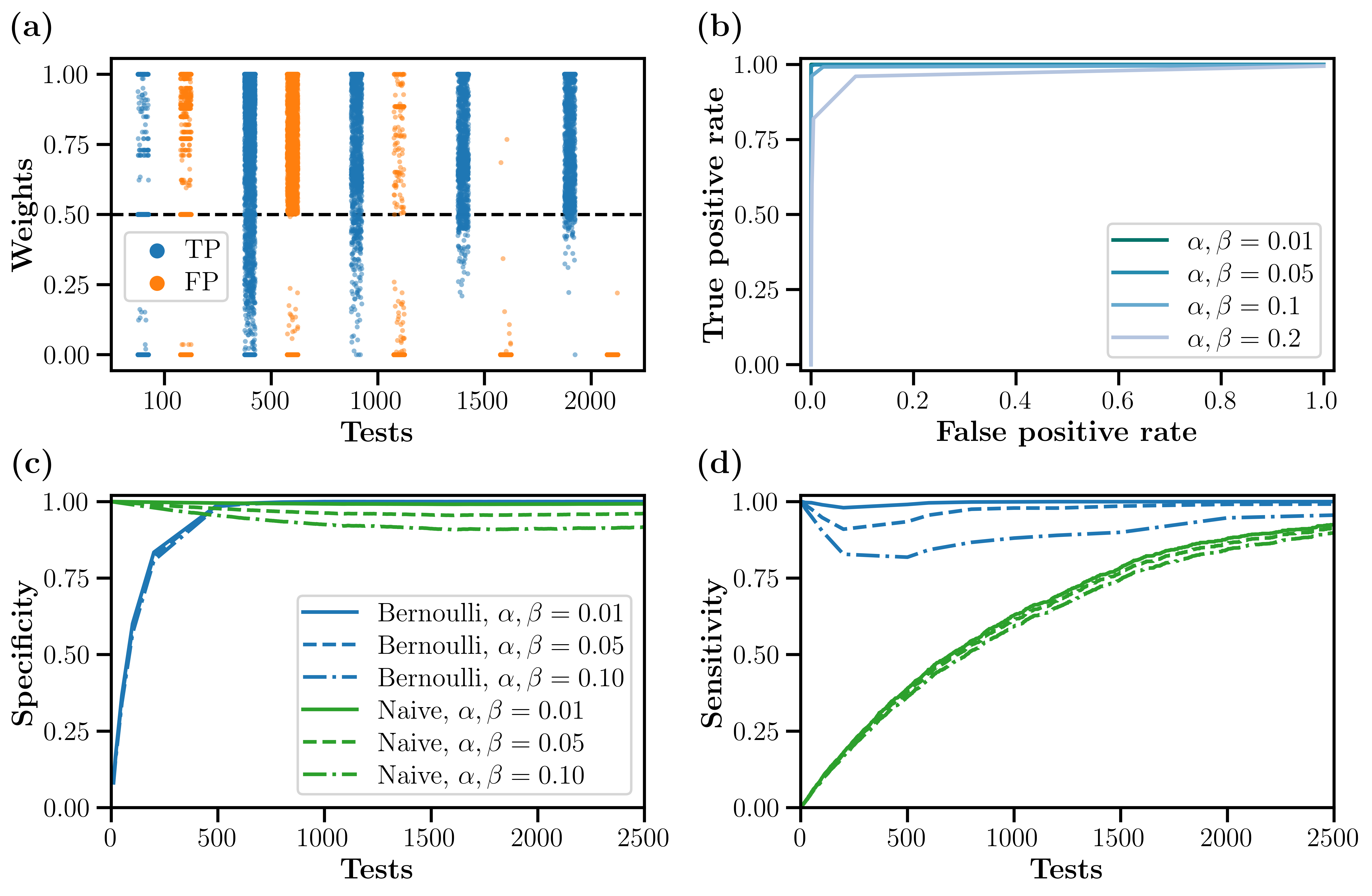}
    \linespread{1.0}\selectfont{\caption{\textbf{Offline network recovery performance. (a)} Recovery improves with increasing numbers of tests. Dots (jittered for visibility) indicate posterior estimates for true connections (blue) and spurious connections (orange) as tests are added. The classification threshold is at 0.5 (dotted line), and we do not plot the nearly $10^6$ true negatives at 0.
    {\bf (b)} ROC curves as a function of test error rates. Even as $\alpha$ and $\beta$ grow, performance degrades only moderately. 
    {\bf (c, d)} Specificity and sensitivity as a function of test number and error rate. The naive approach gradually identifies positive connections, while group testing quickly separates positive and non-connections across the 0.5 threshold. 
    }
    \label{fig:fig2}}
\end{figure}

Figure \ref{fig:suppfigseeds} shows the variation due to setting different random seeds, along with the time per iteration. Each set of results (specificity and sensitivity for all tests) takes about 20 minutes in total to run when using 50 iterations for batch fitting. To run 500 tests for a N=1000 system, for example, would only take up to 3.5 minutes (see specific timing information for each set of tests in Fig. \ref{fig:suppfigseeds}c).

\begin{figure}[!h]
    \includegraphics[width=\textwidth]{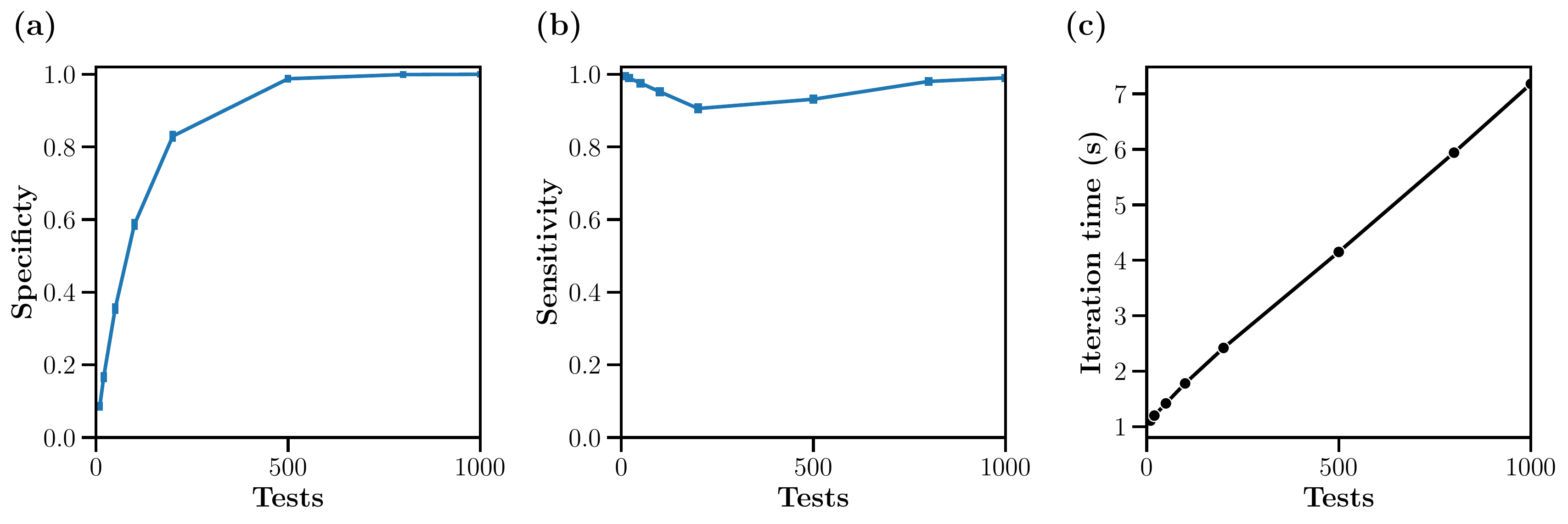}
    \linespread{1.0}\selectfont{\caption{\textbf{Variability and timing. (a, b)} Specificity and sensitivity, respectively, for the base case run with different random seeds (n=20, CI=95\%)
    \textbf{(c)} Time per iteration in seconds, averaged across 50 iterations, as a function of the number of tests for the base case. 
    }
    \label{fig:suppfigseeds}}
\end{figure}

We additionally tested our method on networks with denser sets of connections, $K=N^{\theta}$ where $\theta = [0.3,0.4,0.5]$. As Figure \ref{fig:suppfigsparse} shows, this method is robust to the number of connections per neuron. As the network becomes less sparse, the specificity and sensitivity decrease, but only slightly. 

\begin{figure}[!h]
    \includegraphics[width=\textwidth]{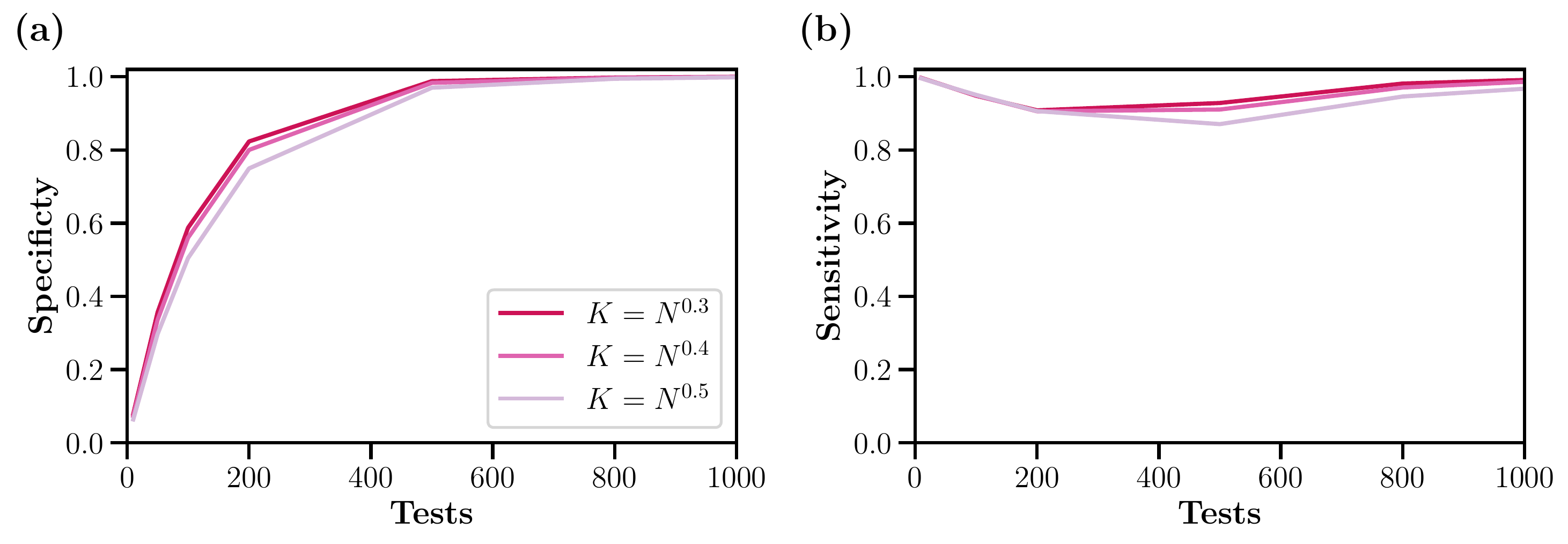}
    \linespread{1.0}\selectfont{\caption{\textbf{Sparsity. (a, b)} Specificity and sensitivity, respectively, for different levels of network sparsity. 
    }
    \label{fig:suppfigsparse}}
\end{figure}

In the base case, we used $S=10$ as the size of our stimulation group. Figure \ref{fig:suppfigNS} shows the effect of varying this stimulation group size. As $S$ grows, the efficiency of group testing increases. Indeed, the optimal choice for $S$ is $\frac{1}{K}$ \cite{atia2012boolean} when $K$ is known. However, for $K=N^{0.3}\approx 8$, this number is large (S=125), and using larger stimulation groups (S$>$20) requires many more iterations of Adam to converge as well as a smaller learning rate (e.g. 200-400 iterations and step size of $\sim$ 0.001), making it impractical. Experimentally, it may also make sense to limit the stimulation group size to avoid heating due to repeated photostimulation across large brain areas. 

\begin{figure}[!h]
    \includegraphics[width=\textwidth]{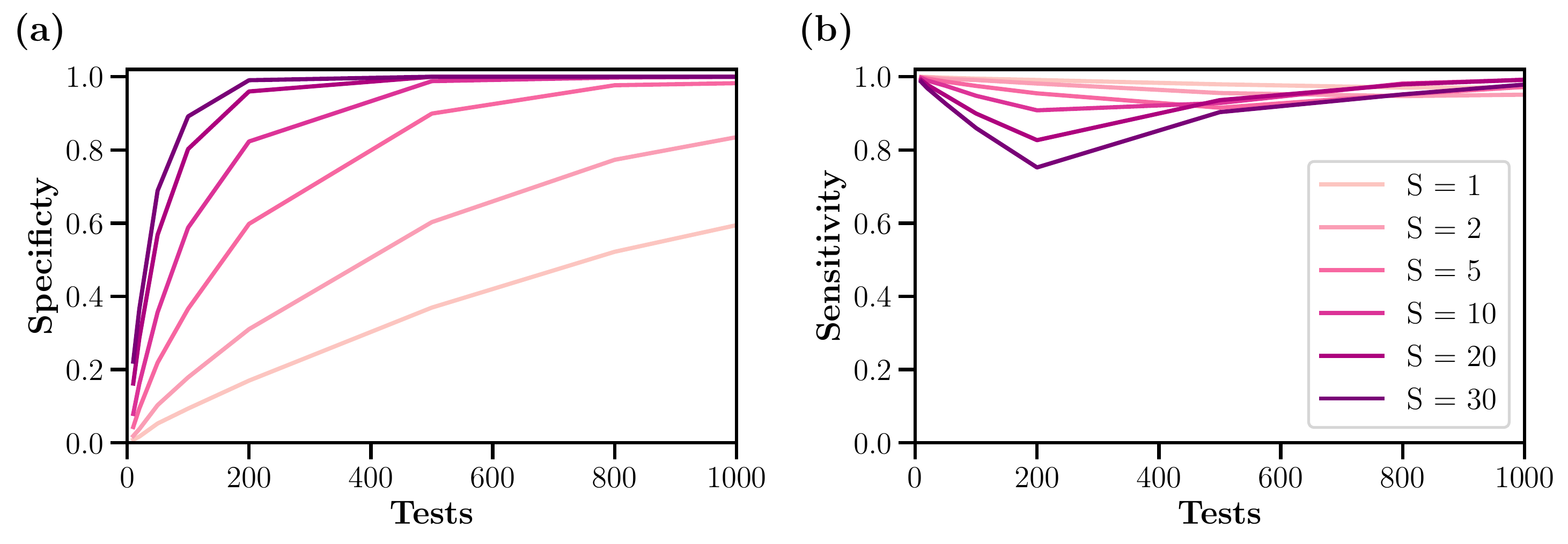}
    \linespread{1.0}\selectfont{\caption{\textbf{Number of neurons stimulated per test. (a, b)} Specificity and sensitivity, respectively, for different sizes of stimulation groups. 
    }
    \label{fig:suppfigNS}}
\end{figure}

\subsection*{Network recovery in online and adaptive settings}
Motivated by real-time, online experimental approaches that seek to intervene in live neural circuits with photostimulation \cite{emiliani2015all, packer2015simultaneous}, we also consider the online case. Here we use gradient descent (not Adam) and a sliding window of 1-10 tests to limit memory requirements and increase speed. Even with only a few fast gradient steps for each new test, we recover the network with the same level of sensitivity and specificity as in the batch case (Fig.~\ref{fig:fig3}).
This enables us to scale inference to much larger populations, even up to $N=10^4$ (Fig.~\ref{fig:fig3}) with an average processing time of $<$ 2s per stimulation, for an estimated experiment time of $\sim$ 1.5 hours for 2500 tests. 

\begin{figure}[!h]
    \includegraphics[width=\textwidth]{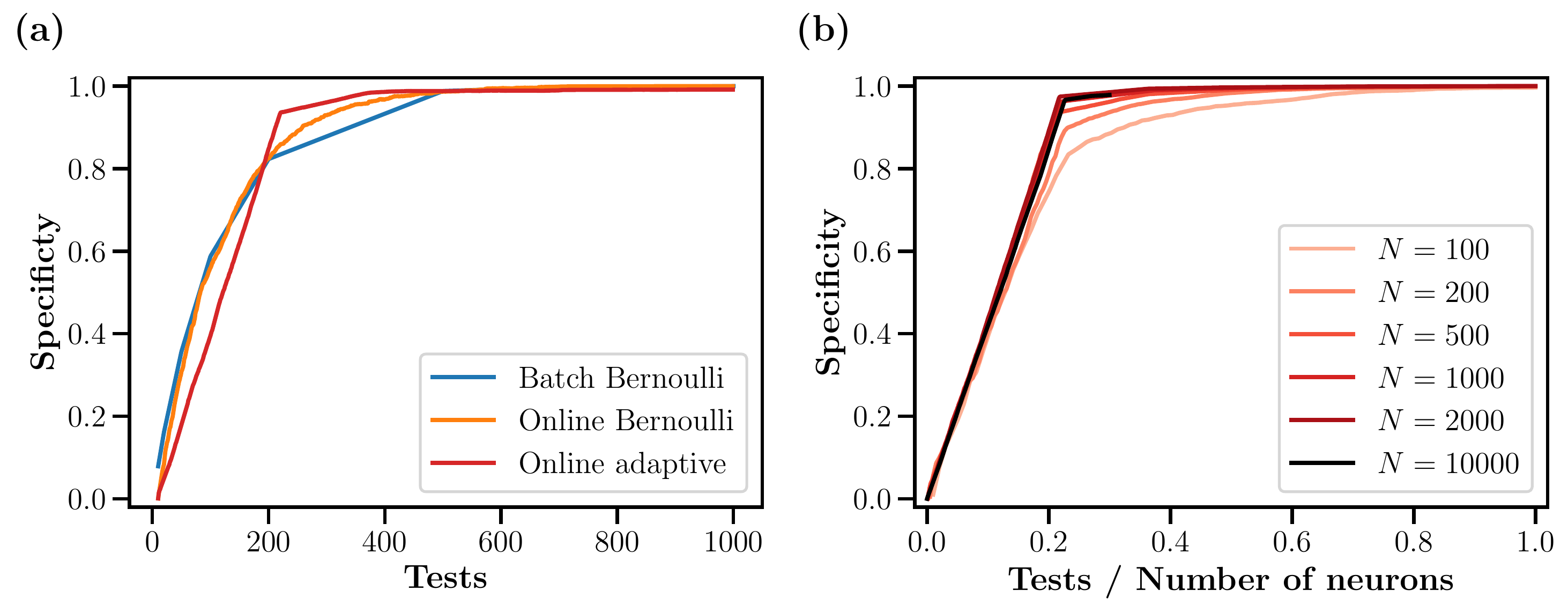}
    \linespread{1.0}\selectfont{\caption{\textbf{Online and adaptive network inference. (a)} Specificity as a function of the number of tests for the naive, online Bernoulli, and online adaptive designs. Performance is similar to the batch case, with the online adaptive approach requiring the fewest tests overall. {\bf (b)} Specificity as a function of the scaled number of tests $T$ (normalized $N$) for different system sizes in the adaptive case. The adaptive case exhibits an inflection point that moves toward $T\approx 0.2 N$ for large $N$. 
    }
    \label{fig:fig3}}
\end{figure}

\subsection*{Uncertainty in test error rates}
In our model (\ref{defn:stat_model}), we have assumed that the true and false positive rates for our test $h$, $\alpha$ and $\beta$, are known accurately. And for many tests of interest, these two quantities may be known \emph{theoretically}, provided the supplied data match the assumptions of the test. But when applied to real biological data assumptions are likely to be violated, and consequently, we may not know $\alpha$ and $\beta$ precisely. Here, we show both empirically and theoretically how this model misspecification affects our results.

Emprirically we observed essentially no difference if we misspecify the error rates; either if we assume them to be lower than they are or if we assume assume them to be higher than they are. The rates that matter are those of the test itself. Figure \ref{fig:suppfigmisspec} shows a case where the assumed $\alpha$ and $\beta$ are highly disparate (0.0001 and 0.45, respectively) and the true $\alpha$ and $\beta$ are those of the base case, 0.05, as well as less disparate but still misspecified cases (e.g. $\alpha=0.1, \beta=0.01$). The model consistently shows a negligible difference from the fit achieved from base case, where we use the true $\alpha$ and $\beta$.

\begin{figure}[!h]
    \includegraphics[width=\textwidth]{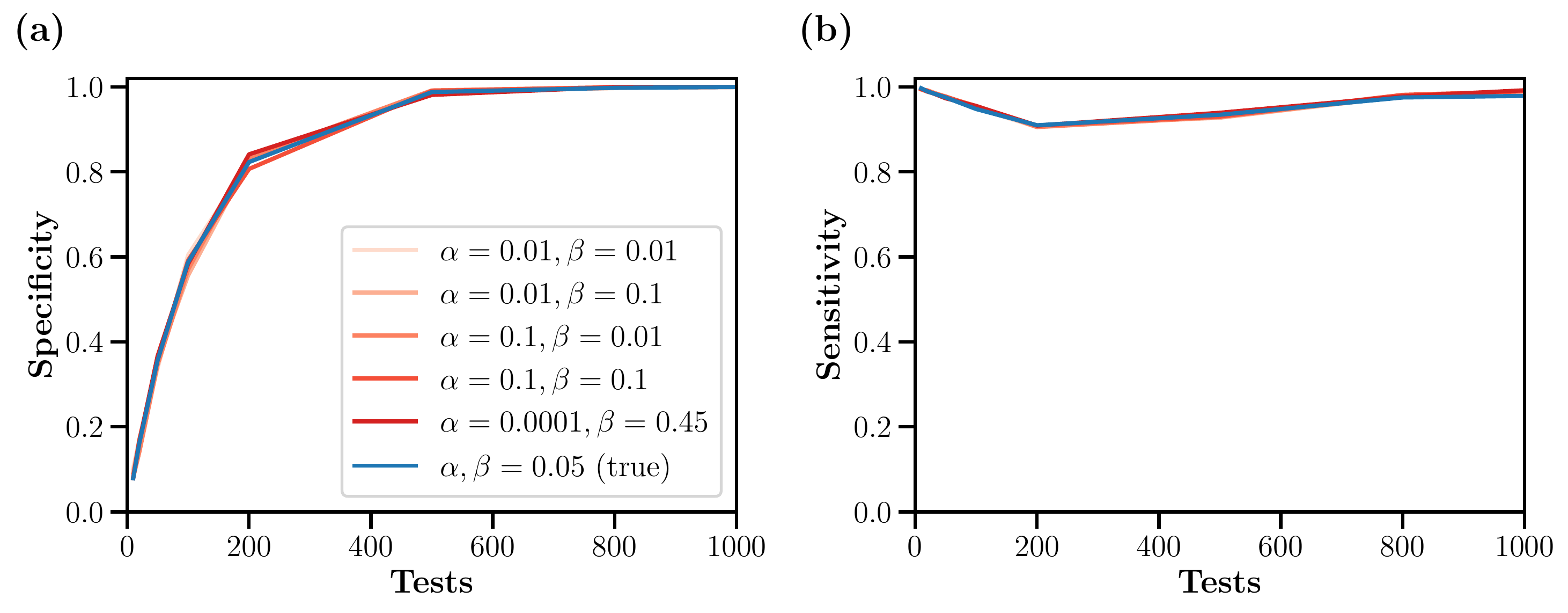}
    \linespread{1.0}\selectfont{\caption{\textbf{Model misspecification. (a, b)} Specificity and sensitivity, respectively, for the base case ($\alpha, \beta = 0.05$) in the main text (blue) and misspecified models with varying $\alpha$ and $\beta$ both over and under confident comapred to the true test error rates, including a disparate case with $\alpha = 0.0001$ and $\beta = 0.45$ (orange).
    }
    \label{fig:suppfigmisspec}}
\end{figure}

From Figure \ref{fig:suppfigmisspec}, it is apparent that model misspecification appears to have negligible impact on network recovery. Here, we show that this is in fact the case under very mild conditions. To do this, we begin with (3), in the $T \rightarrow \infty$ limit, so we can replace averages over trials with expectations over stimulation patterns:
\begin{equation}
    \label{bin_LL_misspec}
    \log p(\out | \conn, \stim) = T\left[\log\frac{(1-\alpha')(1-\beta')}{\alpha'\beta'} \mathbb{E}_\stim[\out\cdot \pred(\conn, \stim)] - \log\frac{1-\alpha'}{\beta'}\mathbb{E}_\stim[\pred(\conn, \stim)] + \text{const} \right] + o(T) \,
\end{equation}
where we have \emph{not} assumed that the error rates for the likelihood model ($\alpha'$, $\beta'$) are the same as those for the actual data-generating process ($\alpha$, $\beta$).

Fortunately, for the Bernoulli model, in which each neuron is stimulated i.i.d. with probability $p$, we can calculate the expectations in (\ref{bin_LL_misspec}). Let $A = \lbrace \stim \vert \pred(\conn, \stim) = 1\rbrace$, $Y = \lbrace \stim \vert \out = 1\rbrace$, and $\omega = \sum_i \conn_i$ is the number of nonzero connections. Computing the expectations and reinserting in (\ref{bin_LL_misspec}), we then have (dropping constants)
\begin{equation}
    \frac{1}{T}\log p(\out | \conn, \stim) \rightarrow \mathcal{L} = c_+(1-\beta-\alpha)\,p(A_* \cap A)  + (\alpha c_+ - c_-)\,p(A) \, ,
\end{equation}
where $c_\pm$ involve logarithms of $\alpha'$ and $\beta'$: $c_+ = \log \frac{1-\beta'}{\alpha'} + c_-$, and $c_- = \log \frac{1-\alpha'}{\beta'}$. Here $*$ indicates quantities calculated in the true data-generating model.

Maximizing this likelihood gives $p(A \cap A_*) = 1 - (1-p)^{\min(\omega, \omega_*)}$ and
\begin{equation}
    \mathcal{L} = \begin{cases} 
        \left((1-\beta)c_+ - c_-\right)(1 - (1-p)^\omega) & \omega < \omega_* \\
        c_+(1-\beta - \alpha)p(A_*) - (c_- - \alpha c_+)(1 - (1-p)^\omega) & \omega > \omega_*
    \end{cases} \nonumber \, .
\end{equation}
which have the same optimum solution, $\omega = \omega_*$, \emph{independently of $c_\pm$} provided
\begin{equation}
    \frac{1-\alpha}{\alpha} > \frac{\log \frac{1-\beta'}{\alpha'}}{\log \frac{1-\alpha'}{\beta'}} > \frac{\beta}{1 - \beta} \label{misspec_bound} \, .
\end{equation}
Of course, if $\alpha$, $\beta < 0.5$ and $\alpha' = \beta'$, this is \emph{always} satisfied. In this case,  
likelihood maximization remains consistent even for a misspecified model, and we do not need accurate estimates of our test error rates to recover the true set of connections.

%PLOS does not support heading levels beyond the 3rd (no 4th level headings).
% \subsection*{\lorem\ and \ipsum\ nunc blandit a tortor}
% \subsubsection*{3rd level heading} 

\section*{Discussion}
Group testing itself comprises a large literature, reviewed in \cite{du2000combinatorial} and more recently \cite{aldridge2019group}. The link between noisy group testing and information theory was established in \cite{malyutov1978separating, atia2012boolean, aldridge2012adaptive, baldassini2013capacity, chan2014non} for the noise models of false positives and dilutions and in \cite{sejdinovic2010note} for both false positives and negatives. These studies established asymptotically optimal numbers of tests maximum likelihood decoding. Linear programming relaxation as a means of efficiently solving the decoding problem was previously proposed in \cite{malioutov2012boolean}, where the objective was to identify the minimal set of positives under an arbitrary noise model. Our approach differs in relaxing both the Boolean sums $\pred_t$ and the defects $\conn_i$, as well as assuming a more specific noise model, which allows us to establish a novel connection between the solution of the relaxed convex program and Bayesian inference (\ref{relaxed_LP}).

In neuroscience, much previous work has focused on inferring functional connectivity from correlational data, either spike trains or calcium fluorescence imaging \cite{paninski2004maximum, okatan2005, pillow2008spatio, stevenson2008bayesian, vidne2012modeling,song2013identification, lutcke2013inference, pernice2013reconstruction, fletcher2014scalable, soudry2015efficient, zaytsev2015reconstruction, karbasi2018learning, ladenbauer2019inferring, latimer2019inferring}. These methods typically rely on likelihood-based models and make moderate to strong parametric assumptions about the data generation process. This can result in inaccurate network recovery, even in simulation \cite{lutcke2013inference, das2020systematic}. Even more problematic is the difficulty of accounting for unobserved confounders \cite{soudry2015efficient}, which can also arise in our setup when non-recorded units mediate functional connections. 

Our work is similar in setup to \cite{aitchison2017model}, which also considered the possibilities inherent in selective stimulation of individual neurons. That work also employed a variational Bayes approach, positing a spike-and-slab prior on weights and an autoregressive generative model of ensuing calcium dynamics. Also of note is \cite{bertran2018active}, which considered optimal adaptive testing of single neurons to establish functional connections. More closely related are the approaches in \cite{hu2009reconstruction, fletcher2011neural}, which used a compressed sensing approach to network recovery. Those works did recover synaptic weights up to an overall normalization but did not consider either adaptive stimulation or the online inference setting. The latter problem was considered in \cite{shababo2013bayesian}, which focused on measurement of subthreshold responses in somewhat smaller systems. 

By contrast with many of these approaches, ours makes relatively few statistical assumptions. We do not posit a generative or parametric model, only the existence of some statistical test for a change (not necessarily excitatory) in neuronal activity following stimulation. Moreover, our approach affords approximate Bayesian inference (which could be extended to exact inference at the cost of additional constraint forces added to (\ref{dual_decomp})), does not require pretraining on existing data, and scales well to large neural populations, making it suitable for use in online settings. 

However, our approach does make key assumptions that might pose challenges for experimental application. First, as Figure \ref{fig:fig2}b shows, tests with poor statistical power require many more stimulations to reach correct inference, and below some threshold number of trials, this decrease in performance may be significant. 
Second, our approach ignores the relative strength of connections, as we focus on the structure of the unweighted network. This drastically reduces the number of parametric assumptions but would require a second round of more focused testing if these were quantities of interest. 
Nonetheless, our results suggest significant untapped potential in the application of adaptive experimental designs to large-scale neuroscience.

\section*{Conclusion}
We have proposed to apply noisy group testing to the problem of inferring functional connections in a neural network. We showed that a relaxation of the maximum likelihood inference problem for this setup is equivalent to Bayesian inference on the binarized network links, and that this problem can be solved efficiently for large populations in the online setting. To our knowledge, this is the first application of group testing to connectivity inference in neuroscience and the first proposal for truly scalable network inference.

\section*{Acknowledgments}
Research reported in this publication was supported by a NIH BRAIN Initiative Planning Grant (R34NS116738; EA and JP), a Ruth K. Broad Biomedical Research Foundation, Inc.\ Postdoctoral Fellowship Award (AD), and a Swartz Foundation Postdoctoral Fellowship for Theory in Neuroscience (AD). We would like to thank Maxim Nikitchenko and Robert Calderbank for useful discussions and previous anonymous reviewers for suggesting several improvements.

Part of this work was previously presented at the 34th Conference on Neural Information Processing Systems (NeurIPS, 2020) \cite{draelos2020online}.

%\nolinenumbers

% Either type in your references using
% \begin{thebibliography}{}
% \bibitem{}
% Text
% \end{thebibliography}
%
% or
%
% Compile your BiBTeX database using our plos2015.bst
% style file and paste the contents of your .bbl file
% here. See http://journals.plos.org/plosone/s/latex for 
% step-by-step instructions.
% 
% \begin{thebibliography}{10}
\bibliography{bib}

% \bibitem{bib1}
% Conant GC, Wolfe KH.
% \newblock {{T}urning a hobby into a job: how duplicated genes find new
%   functions}.
% \newblock Nat Rev Genet. 2008 Dec;9(12):938--950.

% \bibitem{bib2}
% Ohno S.
% \newblock Evolution by gene duplication.
% \newblock London: George Alien \& Unwin Ltd. Berlin, Heidelberg and New York:
%   Springer-Verlag.; 1970.

% \bibitem{bib3}
% Magwire MM, Bayer F, Webster CL, Cao C, Jiggins FM.
% \newblock {{S}uccessive increases in the resistance of {D}rosophila to viral
%   infection through a transposon insertion followed by a {D}uplication}.
% \newblock PLoS Genet. 2011 Oct;7(10):e1002337.

% \end{thebibliography}

\newpage

\setcounter{figure}{0}
\renewcommand{\thefigure}{S\arabic{figure}}

\section*{Supporting Information}
\section{Entropy gradient bounds}
\label{app_H_bounds}

Here, we prove the following bounds for the gradients of the entropy, the weakest (and most efficient) of which we make use of in Algorithm 1:
%\ref{dual_decomp_alg}:
\begin{align}
    4\left|w_i - \frac{1}{2}\right| &\le \left|\log \frac{w_i}{1 - w_i}\right| \le |\nabla_{w_i} \ent| \\ 
    &\le \max\left(\left\lvert w_i^- - \log \frac{w_i}{1-w_i}\right\rvert, \left\lvert w_i^+ - \log \frac{w_i}{1-w_i}\right\rvert \right) \nonumber \\
    4\left|a_t - \left(1 - \epsilon_t(w)\right)\right| &\le \left|\log \frac{a_t}{1 - a_t} - \log\left(\frac{1}{\epsilon_t(w)} - 1 \right) \right|\le |\nabla_{a_t} \ent| \\
    &\le \max\left(\left\lvert a_t^- - \log \frac{a_t}{1-a_t}\right\rvert, \left\lvert a_t^+ - \log \frac{a_t}{1-a_t}\right\rvert \right)  \; .\nonumber
\end{align}
with $\epsilon_t \equiv \prod_i w_i^{\stim_{ti}}$ and $w_i^\pm$, $a_t^\pm$ constants that depend on the other entries in $a$ and $w$.
Note that it is these quantities, rather than the entropy itself, that are important for regularization, since overall entropy bounds may depend crucially on constants that do not affect the optimization that defines $w_i$ and $a_t$. Rather, it is the entropy gradients that define the regularization ``forces'' that result in estimates that are either weaker (lower bound) or stronger (upper bound) than the true entropy gradients and thus estimates of $w_i$ that are closer to or farther away from 0 and 1. Indeed, as we shall see, \emph{both} the upper and lower bounds above derive from upper bounds on the entropy itself.

\subsection{Strong convexity bound}
We start with the following Lemma:
\begin{lemma}
For any exponential family distribution $p(\mathbf{x})$ with only Boolean sufficient statistics, $\ent[p(\mathbf{x})]$ is $\sigma$-strongly concave for $\sigma \in (0, 4]$.
\end{lemma}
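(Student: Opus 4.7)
The plan is to invoke the standard Legendre duality for regular exponential families linking the log-partition function $A(\theta)$ to the entropy in mean-parameter form, then read off the strong concavity constant from a Hessian calculation that exploits the Boolean structure of the sufficient statistics. For $p_\theta(\mathbf{x}) \propto \exp(\theta^\top \phi(\mathbf{x}))$ with $\mu(\theta) = \nabla A(\theta) = \mathbb{E}_\theta[\phi]$, the entropy written in mean parameters is $\mathcal{H}(\mu) = A(\theta(\mu)) - \theta(\mu)^\top \mu$, and differentiating twice yields
\begin{equation}
\nabla^2 \mathcal{H}(\mu) \;=\; -\bigl(\nabla^2 A(\theta)\bigr)^{-1} \;=\; -\mathrm{Cov}_\theta(\phi)^{-1}.
\end{equation}
Hence $\mathcal{H}$ is $\sigma$-strongly concave in $\mu$ iff $\mathrm{Cov}_\theta(\phi) \preceq \sigma^{-1} I$ at the corresponding $\theta$, reducing the lemma to a spectral bound on the covariance of $\phi$.

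Next I would use the Boolean hypothesis to control that spectrum. By Popoviciu's inequality each diagonal entry satisfies $\mathrm{Var}_\theta(\phi_i) \le 1/4$, and therefore $\lambda_{\max}(\mathrm{Cov}_\theta(\phi)) \le \mathrm{tr}(\mathrm{Cov}_\theta(\phi)) \le d/4$, where $d$ is the number of sufficient statistics. This gives $\sigma \ge 4/d > 0$ uniformly in $\theta$, so strong concavity holds with a strictly positive constant. For the upper bound, use that any positive semidefinite $M$ satisfies $\lambda_{\max}(M) \ge \max_i M_{ii}$: at any $\theta$ for which some marginal $\mu_i = 1/2$, the $i$th diagonal entry of $\mathrm{Cov}_\theta(\phi)$ equals $1/4$, so the largest uniform $\sigma$ for which $\mathrm{Cov}_\theta(\phi) \preceq \sigma^{-1} I$ can hold cannot exceed $4$. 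These two bounds bracket the strong concavity constant inside $(0, 4]$.

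The main obstacle is not a deep inequality but bookkeeping about the domain. The Hessian $-\mathrm{Cov}_\theta(\phi)^{-1}$ blows up in magnitude as $\mu$ approaches the boundary of the marginal polytope (where some sufficient statistic becomes almost surely constant), so strong concavity actually \emph{sharpens} there; what the trace inequality buys is a uniform positive lower bound on the interior, which is exactly what the lemma requires. I would also record that the Legendre identity and the inverse-Hessian formula presume the family is regular, i.e.\ $\nabla^2 A(\theta) \succ 0$ for all $\theta$, which is the standard nondegeneracy assumption that the $\phi_i$ are linearly independent under every $p_\theta$. Under that mild assumption, the two bounds hold uniformly and the conclusion follows directly.
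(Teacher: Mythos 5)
Your route is the same as the paper's: Legendre duality gives $\nabla^2 \ent(\mu) = -\mathrm{Cov}_\theta(\phi)^{-1}$, and Popoviciu bounds the variance of each Boolean statistic by $\tfrac{1}{4}$. The divergence is in the final spectral step, and it is exactly where the substance lies. Your trace bound $\lambda_{\max}(\mathrm{Cov}_\theta(\phi)) \le \mathrm{tr}(\mathrm{Cov}_\theta(\phi)) \le d/4$ only yields $\sigma \ge 4/d$, so your argument does \emph{not} establish the lemma as stated, which asserts strong concavity with the dimension-free constant $4$. The paper closes this gap by asserting that since every $\mathrm{Var}(T_i) \le \tfrac{1}{4}$, the maximum eigenvalue of the covariance matrix is also $\tfrac{1}{4}$ --- but that inference is false for correlated statistics, since off-diagonal covariance pushes the top eigenvalue above the largest diagonal entry. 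And the sufficient statistics here are necessarily correlated: each $\pred_t$ is a deterministic function of the $\conn_i$. Concretely, for $\mathbf{x}=(x_1,x_2)$ uniform on $\lbrace 0,1\rbrace^2$ with statistics $x_1$, $x_2$, and $x_1 \vee x_2$, the covariance matrix has top eigenvalue $\tfrac{1}{4}+\tfrac{c}{8}\approx 0.40 > \tfrac14$ (with $c=\tfrac{-1+\sqrt{33}}{4}$), so $-\nabla^2\ent$ has an eigenvalue near $2.5$ there and $4$-strong concavity fails.

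So the verdict is mixed: everything you wrote is correct, and your care exposes that the claimed constant cannot be obtained by this argument --- indeed the lemma is false as stated once there is more than one correlated Boolean statistic. What survives is (i) $\sigma$-strong concavity with $\sigma = 4/d$ in general, bracketed above by $4$ exactly as you show, and (ii) the full constant $4$ in the separable case actually used downstream, where the entropy is replaced by the sum of univariate binary entropies $\ent_2$ and $\ent_2''(x) = -1/(x(1-x)) \le -4$ coordinatewise. If you want your write-up to match the statement, you must either add a hypothesis (uncorrelated sufficient statistics at every $\theta$, or $d=1$ per coordinate) or weaken the conclusion to $\sigma \in (0, 4/d]$; as it stands, the step from $4/d$ to $4$ is a genuine gap that no amount of bookkeeping about the domain will fix.
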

\begin{proof}
Let $T_i(\mathbf{x})$ be the sufficient statistics and $\nu_i$ their natural parameters, so that
\begin{equation}
    p(\mathbf{x}) = \frac{e^{\sum_i \nu_i T_i(\mathbf{x})}}{\mathcal{Z}} \;, \label{exp_fam_form}
\end{equation}
from which follows the well-known exponential family results
\begin{align}
    \frac{\partial}{\partial \nu_i} \log \mathcal{Z} &= \mathbb{E}T_i \\
    \frac{\partial^2}{\partial \nu_i\partial \nu_j} \log \mathcal{Z} &= \frac{\partial \mathbb{E}T_i}{\partial \nu_j} = J_{ij} = \mathbb{E}[T_i T_j] - \mathbb{E}T_i \mathbb{E}T_j = \mathrm{cov}(T_i, T_j) \; .
\end{align}
That is, the Hessian of the negative free energy is both the covariance matrix of the sufficient statistics and the Jacobian of the mapping from the natural parameters to the means. Likewise, for the derivatives of the entropy, 
\begin{align}
    \ent &= \mathbb{E}[- \log p(\mathbf{x})] = -\sum_i\nu_i \mathbb{E}T_i + \log \mathcal{Z} \\
    \frac{\partial}{\partial \mathbb{E}T_j} \ent &= -\nu_j -\sum_k \frac{\partial \nu_k}{\partial \mathbb{E}T_j} \mathbb{E}T_k + \sum_k \mathbb{E}T_k\frac{\partial \nu_k}{\partial \mathbb{E}T_j} = -\nu_j \label{grad_H}\\
    \frac{\partial^2}{\partial \mathbb{E}T_i\partial \mathbb{E}T_j} \ent &= -\frac{\partial \nu_j}{\partial \mathbb{E}T_i} = -J^{-1}_{ij} \;,
\end{align}
which is really another way of saying that $\ent$ and $-\log \mathcal{Z}$ are convex duals, and is related to the Cram\'{e}r-Rao Bound.

Now, recall that for any binary variable $T$, we have $\mathrm{var}(T) \le \frac{1}{4}$, so the maximum eigenvalue of $\mathrm{cov}(T_i, T_j)$, which are all binary, is also $\frac{1}{4}$. From this, it follows that the minimum eigenvalue of $-\nabla^2 \ent$, which is its inverse, is at least 4.

Finally, recall that a continuously differentiable convex function $f(\mathbf{x})$ is $\sigma$-strongly convex for some $\sigma > 0$ if we have, for all $\mathbf{y}$ in $\mathrm{dom}(f)$, 
\begin{equation}
   f(\mathbf{x}) \ge f(\mathbf{y}) + \nabla f(\mathbf{y}) \cdot (\mathbf{x} - \mathbf{y}) + \frac{\sigma}{2} \lVert \mathbf{x} - \mathbf{y} \rVert^2 \; ,
\end{equation}
which is equivalent to $\nabla^2 f \succeq \sigma \mathbb{I}$ [1]. Clearly, this is true when $\sigma$ is no larger than the minimum eigenvalue of $\nabla^2 f$, and we have that $-\ent$ is strongly convex for $\sigma \le 4$.
\end{proof}

In our case, we take $\mathbf{x} = \conn$, $T = (\conn, \pred(\conn))$ and $\nu = (\nu, \gamma)$. Our plan is to expand this around the maximum of $\ent$. This point is achieved at $\nu_i = \gamma_t = 0$ and corresponds to independent $\conn_i$ with $w_i = \frac{1}{2}$ and $a_t = 1- \epsilon_t(0) = 1 - \left(\frac{1}{2}\right)^{\sum_i \stim_{ti}}  \approx 1$ when the number of units tested is large. Then, from the lemma and the definition of strong convexity,
\begin{equation}
    \ent \le \ent_{sc} = N \log 2 - 2 \left\lVert \mathbf{w} - \frac{1}{2} \right\rVert^2 - 2 \left\lVert \mathbf{a} - 1 + \boldsymbol{\epsilon} \right\rVert^2 \; .
\end{equation}
Finally, since we have $\ent = \ent_{sc}$ and $\nabla \ent = \nabla \ent_{sc} = \mathbf{0}$ at $w_i = a_t = 0$, and $-\nabla^2 \ent \succeq -\nabla^2 \ent_{sc}$ from above, we have $|\nabla \ent_{sc}| \le |\nabla \ent|$ everywhere.

\subsection{Independent connections bound}
The second, stronger lower bound can be derived by once again considering the exponential family form (\ref{exp_fam_form}). For binary variables, we can write
\begin{equation}
    \mathbb{E}T_i \propto e^{\nu_i}\sum_{\mathbf{x}} T_i(\mathbf{x}) e^{\sum_{j \neq i} \nu_j T_j(\mathbf{x})} \propto e^{\nu_i}\mathbb{E}_{-i}T_i \;, 
\end{equation}
which gives
\begin{equation}
    \nu_i = \log \frac{\mathbb{E}T_i}{1 - \mathbb{E}T_i} - \log \frac{\mathbb{E}_{-i}T_i}{1 - \mathbb{E}_{-i}T_i} \;. \label{grad_H_log_diff}
\end{equation}
From (\ref{grad_H}), this is $-\nabla\ent$. The first term on the right-hand side involves expectations we assume known, while the second involves expectations in a reduced model with $\nu_i = 0$. Thus, if we were able to calculate $\mathbb{E}_{-i}T_i$, we could calculate $\nabla \ent$ exactly. Unfortunately, this calculation is intractable in general. However, specializing to our case, if we consider $\ent$ as a function of $(\nu, \gamma)$, then concavity gives
\begin{equation}
   0 = |\nabla_i \ent (0, 0) | \le |\nabla_i \ent (\nu, 0) | \le |\nabla_i \ent (\nu, \gamma) | \; .
\end{equation}
The middle term, with $\gamma = 0$, corresponds to a model with independent $\conn_i$, where we can easily calculate all expectations in (\ref{grad_H_log_diff}), giving 
\begin{align}
    \left\lvert \log \frac{w_i}{1 - w_i} \right\rvert &\le |\nabla_{w_i} \ent | \label{w_lowers} \\
    \left\lvert \log \frac{a_t}{1 - a_t} - \log \frac{1 - \prod_i w_i^{\stim_{ti}}}{\prod_i w_i^{\stim_{ti}}} \right\rvert &\le |\nabla_{a_t} \ent | \label{a_lowers} \;.
\end{align}

\begin{figure}[!h]
\label{H_bounds_fig}
\centering
\includegraphics[width=1\textwidth]{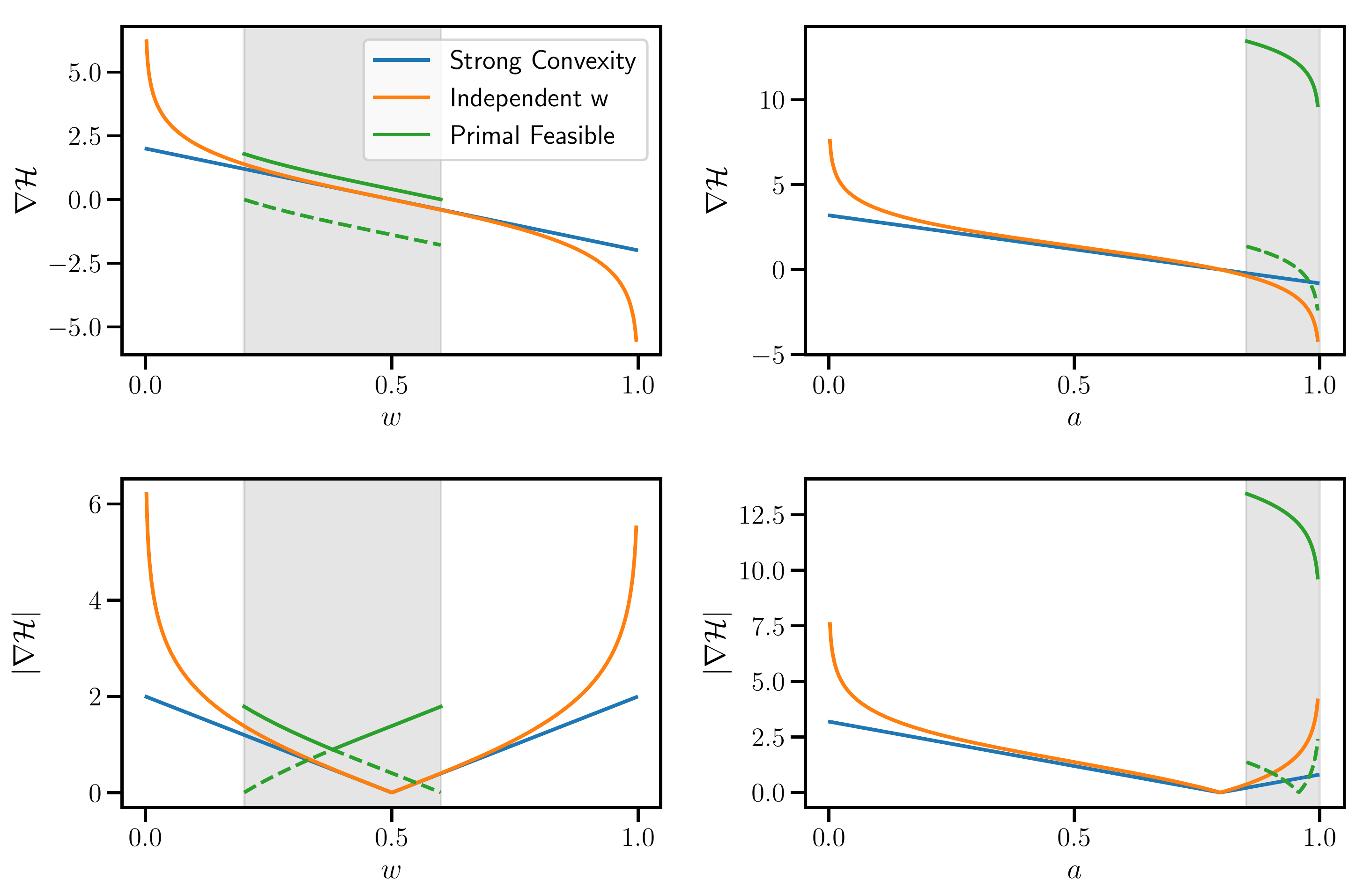}
\caption{{\bf Comparison of entropy gradient bounds.} Plots of $\nabla \ent$ (top) and its magnitude $|\nabla \ent|$ (bottom) for representative (unrelated) cases of $w$ (left) and $a$ (right). Magnitude lower bounds based on strong convexity and independent $\conn$ (\ref{w_lowers} - \ref{a_lowers}) are close near the maximum entropy point and diverge with distance from it. Upper (solid) and lower (dotted) bounds based on feasibility constraints (shaded region) for $(w, a)$ (\ref{w_feasible_bounds} - \ref{abs_grad_H_feasible}) likewise show an increasing gap near the endpoints of the interval. Importantly, the bounds for each $w_i$ depend on all $a_t$ in which it participates, and the bounds for $a_t$ depend on all $w_i$ tested. Lower bounds on $|\nabla \ent|$ produce less regularized, optimistic estimates of $w$ and $a$, while upper bounds produce conservative estimates biased toward the maximum entropy point.}
\end{figure}

\subsection{Feasibility bounds}
A final approach to bounding $|\nabla \ent|$ again starts from (\ref{grad_H_log_diff}), but this time simply bounds the second term based on mutual constraints among the parameters $w_i$ and $a_t$. That is, we again want to calculate $\mathbb{E}_{-i}T_i$, the mean of $T_i(\mathbf{x})$ under the exponential family distribution with no constraints on $T_i$ but all other sufficient statistic means specified. So, for example, we want $w_i$ calculated under the maximum entropy distribution with $(w_{i\neq j}, a_t)$ specified. Yet recall that the definition $\pred_t \equiv \max(\stim_{ti} \conn_i)$ implies constraints on $a_t = \mathbb{E}\pred_t$ and $w_i = \mathbb{E}\conn_i$:
\begin{equation}
    w_i \stim_{ti} \le a_t \le \sum_i \stim_{ti} w_i \; .
\end{equation}
But this allows us to conclude that, for any $i$, $t$,
\begin{align}
    \max(\lbrace a_t - \sum_{j\neq i} \stim_{tj} w_j \rbrace \cup \lbrace 0 \rbrace) &\le w_i \le \min(\lbrace a_t \vert \stim_{ti} = 1\rbrace) \label{w_feasible_bounds}\\
    \max(\lbrace \stim_{tj} w_j \rbrace ) &\le a_t \le \sum_i \stim_{ti} w_i \label{a_feasible_bounds}\,.
\end{align}
That is, if constraints dictate that $w_i \in [w_i^-, w_i^+]$,  we have from (\ref{grad_H_log_diff})
\begin{align}
   w_i^- - \log \frac{w_i}{1 - w_i} &\le \nabla \ent_{w_i} \le w_i^+ - \log \frac{w_i}{1 - w_i} \label{grad_H_feasible} \\
   \min \left(|\nabla \ent_{w_i}^-|, |\nabla \ent_{w_i}^+|\right) &\le |\nabla \ent_{w_i}| \le \max\left(|\nabla \ent_{w_i}^-|, |\nabla \ent_{w_i}^+|\right) \label{abs_grad_H_feasible}\,,
\end{align}
with exactly analogous formulas for $a_t$. Note that the $w_i^\pm$ depend on \emph{both} the other $w_j$ with which $w_i$ appears in tests \emph{and} the $a_t$ for the tests including it, while the $a_t^\pm$ depend only on those connections $w_i$ tested on trial $t$. Moreover, these latter bounds allow the constraints in (8) from the main text to be included in $w^\pm$ and $a^\pm$, which can be used (at some computational cost) to derive conservative bounds on the true posteriors by upper bounding $\nabla {\ent}$.

\subsection{Inference with binary entropy}
In contrast with our best recovery approximation, inference with $\ent_2(x) = -x \log x - (1-x) \log(1-x)$ (see (14) in main text) requires a much greater number of tests to reach the same level of specificity and sensitivity given a classification boundary of 0.5. Here, we present results for a smaller system, $N=200$ (Fig. \ref{fig:suppfigexact}). In general, this model exhibits many fewer false positives (specificity $\sim$ 1), while the posteriors for the true positive connections are less confident than the approximate case, ranging from 0.5 to 0.8 (when the approximate estimates are  $>0.8$). That is, overconfidence generally benefits recovery performance, while a decision rule based on the posterior marginals from tighter bounds requires many more tests for the same level of accuracy. This is at least in part due to the fact that the marginals fail to capture interactions among the $\conn$, and so are expected to underperform estimates like the true MAP, which do. 

\begin{figure}
    \includegraphics[width=\textwidth]{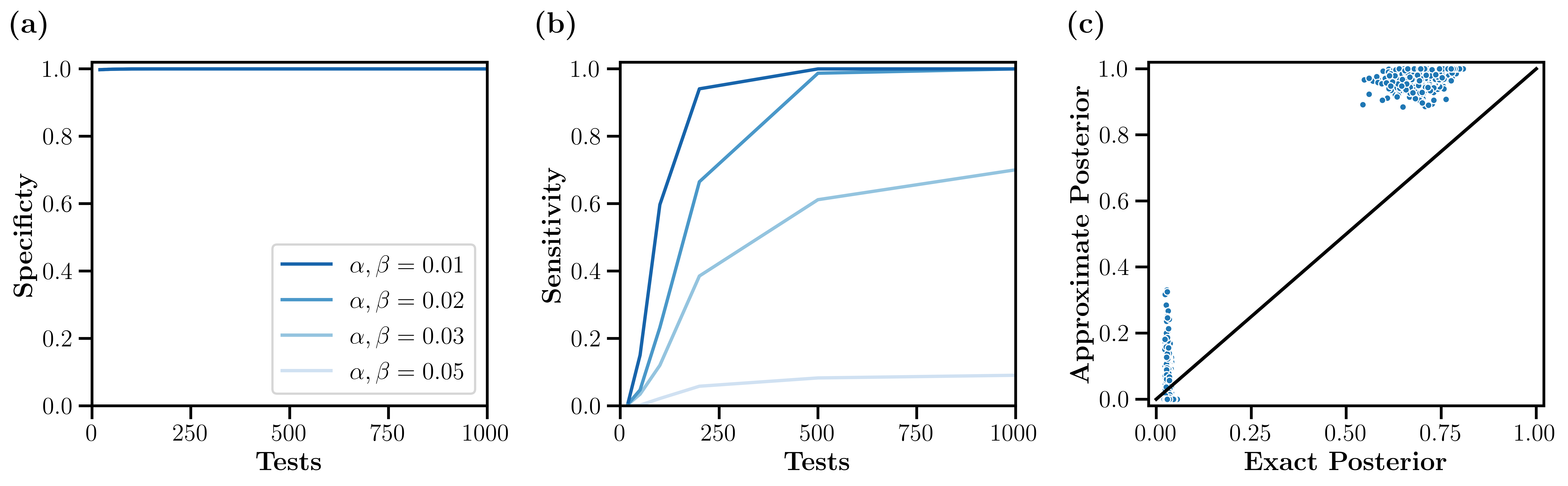}
    \linespread{1.0}\selectfont{\caption{\textbf{Recovery using binary entropy bounds. (a, b)} Specificity and sensitivity, respectively, for different test error rates.
    \textbf{(c)} Calibration plot comparing the weights obtained using the quadratic entropy bound and those obtained with the binary entropy bound (N=200, T=1000, $\alpha, \beta = 0.02$).
    }
    \label{fig:suppfigexact}}
\end{figure}

\section{Naive baseline model}
\label{app:baseline_model}
As a baseline model for network recovery, we consider two versions of a naive protocol based on individual cell ($S=1$) stimulations. For each test, a target neuron is randomly chosen (i.i.d.) from the entire population. In the first method, responses (0 or 1, according to the output of the hypothesis test) for each other neuron in the network are recorded, and these are used to update connection estimates based on a running mean. That is, the outgoing connections for the target neuron are updated each time the neuron is stimulated. All connections are initialized to zero, and connections that produce a result more than 50\% of the time are set to 1. This method was used for the naive comparison in the main text.

A second analysis approach for the same stimulation protocol is to use Bayesian inference, placing Beta priors on each connection that favor non-existence (e.g., $a = 1$, $b = 10$). In this case, recovery is based on a thresholded version of the maximum a posteriori estimate given $n_1$ responses and $n_0$ non-responses to stimulation: $\conn_{ij} =1$ if
\begin{equation}
    w_\mathrm{{MAP}} = \frac{a + n_1 - 1}{a + b + n_0 + n_1 - 2} > \frac{1}{2} \, .
\end{equation}
If $a, b = 1$ this reduces exactly to the first naive method. The stronger the bias towards 0 in the prior, the more tests are required to correctly infer the true connections, but the number of false positives is greatly reduced. 

Figure \ref{fig:suppfignaive} shows the results of all tested naive approaches. The first method of averaging used in the main text initialized all connections to zero (solid green line); here we also show the case where all connections are initialized to 0.5 (dotted green line), and the roles of specificity and sensitivity are effectively reversed. Finally, the second method using Bayesian inference (pink) with a Beta prior (a = 1, b = 5) requires many more tests to reach the same level of sensitivity, but is most successful at remaining highly specific, similar to the case of Bayesian inference with binary entropy bounds in our new approach (see section 1.4). 

\begin{figure}[!h]
    \includegraphics[width=\textwidth]{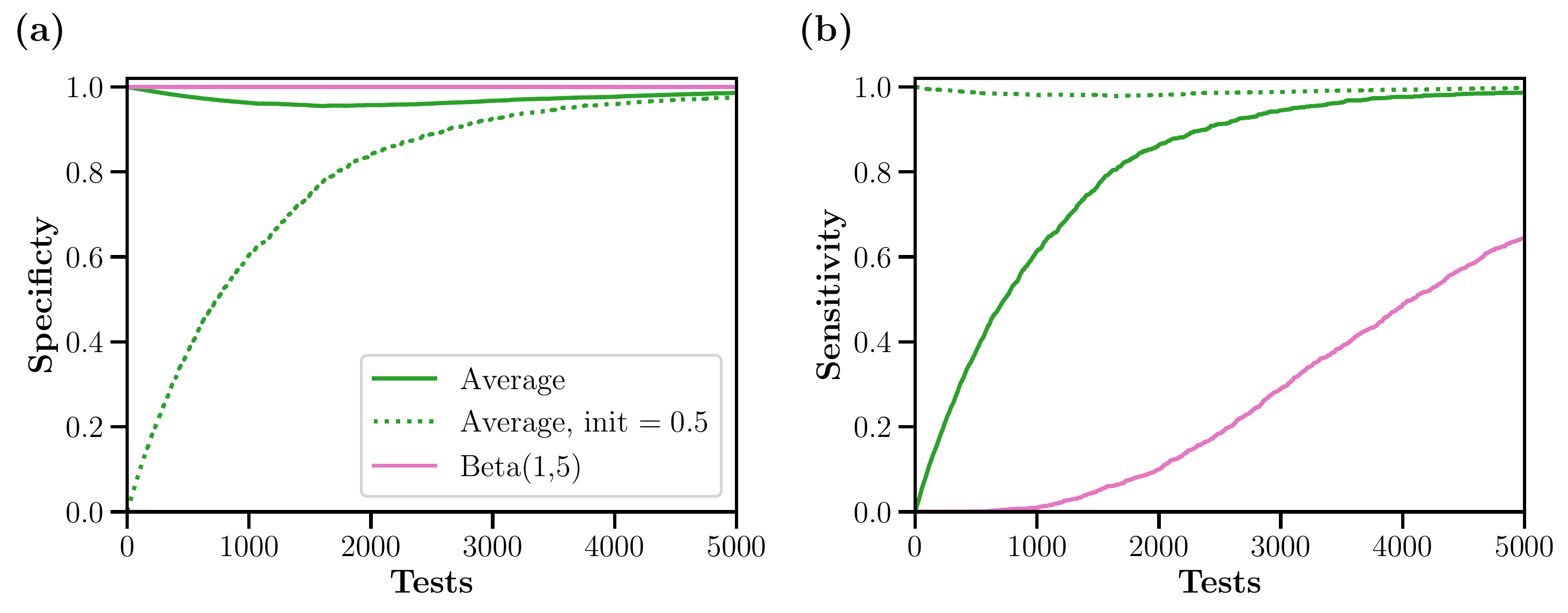}
    \linespread{1.0}\selectfont{\caption{\textbf{Naive methods. (a, b)} Specificity and sensitivity, respectively, for the naive methods tested. The average case (solid green) was used in the main text.
    }
    \label{fig:suppfignaive}}
\end{figure}

\section{Bayesian analysis of uncertain error rates}
In a full Bayesian analysis, we can consider placing priors on the test error rates:
\begin{align}
    \alpha &\sim \mathrm{Beta}(\phi_+, \phi_-) \\
    \beta &\sim \mathrm{Beta}(\varphi_+, \varphi_-)
\end{align}
%\ref{defn:stat_model}, \ref{bin_LL}
Combining this with (2), we again have (3) from the main text, but we must now marginalize over our uncertainty in $\alpha$ and $\beta$. That is, we want
\begin{align}
    p(\out|\conn, \stim) &= \int p(\out|\conn, \stim, \alpha, \beta) p(\alpha) p(\beta) \, d\alpha\, d\beta \label{beta_marginal}\\
    &= \frac{B(\phi_+ + \nfp, \phi_- + \ntn)B(\varphi_+ + \nfn, \varphi_- + \ntp)}{B(\phi_+, \phi_-)B(\varphi_+, \varphi_-)}\, ,
\end{align}
where $B(x, y)$ is the beta function, $\ntp$ is the number of true positives ($\pred_t = 1$, $\out_t =1$), and similarly for the other expressions.
%\ref{bin_LL}
We would like to relate this quantity to (3). The easiest way to do this is to consider the limit of large numbers of tests, so that the beta functions are given by Stirling's approximation to $\Gamma(x)$. That is, 
\begin{equation}
    B(x, y) \sim \sqrt{2\pi}\frac{x^{x - \frac{1}{2}} y^{y - \frac{1}{2}}}{(x + y)^{x + y -\frac{1}{2}}} \, ,
\end{equation}
so that (\ref{beta_marginal}) gives
\begin{multline}
    \log p(\out|\conn, \stim) = \nfp \log \left(\frac{\phi_+ + \nfp}{\phi_+ + \phi_- + \nfp + \ntn} \right) + \ntn \log \left(\frac{\phi_- + \ntn}{\phi_+ + \phi_- + \nfp + \ntn} \right) \\ 
    + \nfn \log \left(\frac{\varphi_+ + \nfn}{\varphi_+ + \varphi_- + \nfn + \ntp} \right) + \ntp \log \left(\frac{\varphi_- + \nfn}{\varphi_+ + \varphi_- + \nfn + \ntp} \right) \\
    - \frac{1}{2}\log (\phi_+ + \phi_- + \nfp + \ntn) 
    - \frac{1}{2}\log (\varphi_+ + \varphi_- + \nfn + \ntp) + \text{constant} \, , \label{marginal_log_prob}
\end{multline}
%\ref{bin_LL}
which can be put into correspondence with (3) (up to subleading logarithmic terms in $n$) if we identify 
\begin{align}
    \bar{\alpha} &= \frac{\phi_+ + \nfp}{\phi_+ + \phi_- + \nfp + \ntn} \label{post_mean_alpha}\\
    \bar{\beta} &= \frac{\varphi_+ + \nfn}{\varphi_+ + \varphi_- + \nfn + \ntp} \label{post_mean_beta} \; . 
\end{align}
Of course (\ref{post_mean_alpha}) and (\ref{post_mean_beta}) are just the posterior means of $\alpha$ and $\beta$, and we see that in the limit of large numbers of tests, the logarithmic terms in $n$ can be ignored relative to the linear terms and the log evidence concentrates around the parameters of the data generating process. This in turn suggests an empirical Bayes approach in which we alternate variational inference (with $n$s fixed) with adjustment of the $n$s based on posterior estimates of the $\mathbb{E}\pred_t$. Fortunately, this alternation would only be necessary until the estimates of error rates stabilized, which can happen rapidly when we pool across the (assumed) independent sets of input connections. That is, for a population of $N$ neurons, one observes $N$ outcomes for each stimulation $t$, suggesting accurate estimation in only a small number of trials $T$ (provided the $\mathbb{E}\pred_t$ estimates are not changing rapidly). We leave this possibility for future work.

\end{document}